\newcommand{\M}{\mathcal{M}}
\newcommand{\g}{\mathcal{G}}
\newcommand{\e}{\mathcal{E}}
\newcommand{\R}{r}
\newcommand{\s}{\mathcal{S}}
\newcommand{\A}{\mathcal{A}}
\newcommand{\pol}{\mathcal{P}}
\newcommand{\E}{\mathbb{E}}
\newcommand{\C}[1]{\left\|\frac{d_{\mu,#1}}{\nu}\right\|_\infty}
\newtheorem{definition}{Definition}
\newtheorem{theorem}{Theorem}
\newtheorem{lemma}{Lemma}
\title{Policy Search: Any Local Optimum Enjoys \\ a Global Performance Guarantee}
\author{
Bruno Scherrer, \\
LORIA -- MAIA project-team,\\
Nancy, France,\\
\texttt{bruno.scherrer@inria.fr} \and
Matthieu Geist,\\
Sup\'elec -- IMS-MaLIS Research Group,\\
Metz, France,\\
\texttt{matthieu.geist@supelec.fr}
}
\begin{document}

\maketitle

\begin{abstract}%
Local Policy Search is a popular reinforcement learning approach for handling large state spaces. Formally, it searches locally in a parameterized policy space in order to maximize the associated value function averaged over some predefined distribution. It is probably commonly believed that the best one can hope in general from such an approach is to get a local optimum of this criterion. In this article, we show the following surprising result: \emph{any} (approximate) \emph{local optimum} enjoys a \emph{global performance guarantee}. We compare this guarantee with the one that is satisfied by Direct Policy Iteration, an approximate dynamic programming algorithm that does some form of Policy Search: if the approximation error of Local Policy Search may generally be bigger (because local search requires to consider a space of stochastic policies), we argue that the concentrability coefficient that appears in the performance bound is much nicer. Finally, we discuss several practical and theoretical consequences of our analysis.

\end{abstract}

\section{Introduction}
\label{sec:intro}

We consider the reinforcement learning problem formalized through Markov Decision Processes (MDP) \citep{Sutton:1998,Puterman:1994}, in the situation where the state space is large and approximation is required. On the one hand, Approximate Dynamic Programming (ADP) is a standard approach for handling large state spaces. It consists in mimicking in an approximate form the standard algorithms that were designed to optimize globally the policy (maximizing the associated value function for each state).
On the other hand, Local Policy Search (LPS) consists in parameterizing the policy (the so-called ``actor'') and locally maximizing the associated expected value function, for example using a (natural) gradient ascent~\citep{baxter:2001,Kakade:2001} (possibly with a critic~\citep{Sutton:1999b,Peters:2008}), expectation-maximization (EM)~\citep{kober_MACH_2011}, or even directly using some black-box optimization algorithm~\citep{Heidrich:2009}. 

The distinction we make here between ADP and LPS relies  on the overall algorithmic scheme that is considered (dynamic programming or local expected value maximization). For example, we see the Direct (or Classification-based) Policy Iteration (DPI) algorithm~\citep{Lagoudakis:2003a,Fern:2006,lazaric:2010} as belonging to the ADP family: even if it can be seen as a policy search method (since there is no representation for the value function), the algorithm follows the general approximate policy iteration (API) scheme. The Conservative Policy Iteration (CPI) algorithm~\citep{kakade:2002}---of which the analysis has close connections with what we are going to argue in this paper---might be considered at the frontier of ADP and LPS: it is based on a damped version of API, where each new policy is a convex mixture of the current policy and the greedy one, the precise combination being chosen such as guaranteeing an improvement in terms of the local fitness (the value function averaged over some predefined distribution).

Following the seminal works by \cite{Bertsekas:1996}, it has been shown that ADP algorithms enjoy global performance guarantees, bounding the loss of using the computed policy instead of using the optimal one as a function of the approximation errors involved along the iterations, for example for approximate policy iteration (API)~\citep{Munos03},  for approximate value iteration (AVI)~\citep{Munos_SIAM07}, or more generally for approximate modified policy iteration (AMPI)~\citep{scherrer:12_ampi}.
To the best of our knowledge, similar general guarantees do not exist in the literature for LPS algorithms. Bounds have been derived for the CPI algorithm by~\cite{kakade:2002}, and at first glance, one may think that this was due to its closeness to the ADP family. In general though, the best one can hope for LPS  is to get a local optimum of the optimized fitness (that is, a local maximum of the averaged value function), and the important question of the loss with respect to the optimal policy remains open. As for instance mentionned as the main ``future work'' in~\cite{Bhatnagar:2008}, where the convergence of a family of natural actor-critic algorithms is proven, ``\textit{[i]t is important to characterize the quality of converged solutions.''}

Experimentally, LPS methods seem to work pretty well. Applications to standard benchmarks \citep{baxter:2001,Kakade:2001,Peters:2008} and real applications such as robotics \citep{Peters:2008,kober_MACH_2011}  suggest that they are competitive with the ADP approach. Surprisingly, gradient-based and EM approaches, that are usually prone to be stuck in local optima, do not seem to be penalized in practice. Even more surprisingly, it was shown \citep{Kakade:2001} that a natural gradient ascent in the policy space can outperform ADP on the Tetris game.
The motivation of this paper is to fill the theoretical gap on the LPS methods and to explain to some extent their empirical successes.
Our main contribution (Theorem~\ref{th:mainresult}) is to show that \emph{any (approximate) local optimum of the expected value function enjoys a global performance guarantee}, similar to the one provided by ADP algorithms. The proof technique  we use is reminiscent of the one used for CPI, but our result is much more general and applies to a broad class of algorithms. Section~\ref{sec:background} provides the necessary background and states formally what we mean by local policy search. Section~\ref{sec:main} states and proves our main result. Section~\ref{sec:discussion} discusses it. Notably, a comparison to similar bounds for ADP is proposed and the practical consequences of the result are discussed. Section~\ref{sec:conclusion} opens some perspectives.

\section{Background and notations}
\label{sec:background}

Write $\Delta_X$ the set of probability distributions over a finite set $X$ and $Y^X$ the applications from $X$ to the set $Y$. By convention, all vectors are column vectors, except distributions which are row vectors (for left multiplication).
We consider a discounted MDP $\M = \{\s, \A, P, \R, \gamma\}$ \citep{Puterman:1994,Bertsekas:1995}, with $\s$ the finite state space\footnote{It is straightforward to extend our results to the case of infinite state space and compact action space. We chose the finite space setting for the ease and clarity of exposition.}, $\A$ the finite action space, $P\in(\Delta_\s)^{\s\times\A}$ the Markovian dynamics ($P(s'|s,a)$ denotes the probability of transiting to $s'$ from the $(s,a)$ couple), $\R\in\mathbb{R}^{\s\times \A}$ the bounded reward function and $\gamma\in[0,1)$ the discount factor. A stochastic policy $\pi\in(\Delta_\A)^\s$ associates to each state $s$ a probability distribution $\pi(.|s)$ over the action space $\A$.  For a given policy $\pi$, we write $\R_\pi\in\mathbb{R}^\s$ defined as $\R_\pi(s) = \sum_{a\in\A} \pi(a|s) \R(s,a) = \E_{a\sim \pi(.|s)}[\R(s,a)]$ and $P_\pi\in(\Delta_\s)^\s$ defined as $P_\pi(s'|s) = \sum_{a\in\A}\pi(a|s) P(s'|s,a) = \E_{a\sim\pi(.|s)}[P(s'|s,a)]$. The value function $v_\pi$ quantifies the quality of a policy $\pi$ for each state $s$ by measuring the expected cumulative reward received for starting in this state and then following the policy:
\begin{equation}
  v_\pi(s) = \E\left[\sum_{t\geq 0} \gamma^t \R_\pi(s_t)|s_0 = s, s_{t+1}\sim P_\pi(.|s_t)\right].
\end{equation}
The Bellman operator $T_\pi$ of policy $\pi$ associates to each function $v\in\mathbb{R}^\s$ the function defined as
\begin{equation}
  [T_\pi v](s) = \E\left[\R_\pi(s) + \gamma v(s')|s' \sim P_\pi(.|s)\right],
\end{equation}
or more compactly $T_\pi v = \R_\pi + \gamma P_\pi v$. The value function $v_\pi$ is the unique fixed point of $T_\pi$.

It is well known that there exists a policy $\pi_*$ that is optimal in the sense that it satisfies $v_{\pi_*}(s) \geq v_\pi(s)$ for all states $s$ and policies $\pi$. The value function $v_*$ is the unique fixed point of the following nonlinear Bellman equation:
\begin{equation}
  v_* = T v_* \text{ with } T v = \max_{\pi\in\A^\s} T_\pi v
\end{equation}
where the $\max$ is taken componentwise. Given any function $v \in \mathbb{R}^\s$, we say that a policy $\pi'$ is greedy with respect to $v$ if $T_{\pi'} v = T v$, and we write $\g(\pi)$ for the set of policies that are greedy with respect to the value $v_\pi$ of some policy $\pi$. The notions of optimal value function and greedy policies are fundamental to optimal
control because of the following property: any policy $\pi_*$ that is greedy with respect to the
  optimal value is an optimal policy and its value $v_{\pi_*}$ is equal to $v_*$. Therefore, another equivalent characterization is that $\pi$ is optimal if and only if it is greedy with respect to its value, that is if
\begin{equation}
\label{optimality}
\pi \in \g(\pi).
\end{equation}


For any distribution $\mu$, we define the $\gamma$-weighted occupancy measure\footnote{When it exists, this measure tends to the stationary distribution of $P_\pi$ when the discount factor tends to $1$.} induced by the policy $\pi$ when the initial state is sampled for $\mu$ as $d_{\mu,\pi} = (1-\gamma)\mu(I-\gamma P_\pi)^{-1} $ (we recall $\mu$ to be a row vector by convention) with $(I-\gamma P_\pi)^{-1} = \sum_{t\geq 0} (\gamma P_\pi)^t$. It can easily be seen that $\mu v_\pi=\frac{1}{1-\gamma} d_{\mu,\pi} \R_\pi$. For any two distributions $\mu$ and $\nu$, write $\left\|\frac{\mu}{\nu}\right\|_\infty$ the smallest constant $C$ satisfying $\mu(s) \leq C \nu(s)$, for any $s\in S$ (this constant is actually the supremum norm of the componentwise ratio, thus the notation).


From an algorithmic point of view,  Dynamic Programming methods compute the optimal value policy pair $(v_*, \pi_*)$ in an iterative way. When the problem is large and cannot be solved exactly, Approximate Dynamic Programming (ADP) refers to noisy implementations of these exact methods, where the noise is due to approximations at each iteration. For instance, Approximate Value and Policy Iteration respectively correspond to the following schemes:
\begin{equation}
  v_{k+1} = T v_k + \epsilon_k  \quad \text{ and } \quad \begin{cases}
    v_k = v_{\pi_k}  + \epsilon_k \\ \pi_{k+1} \in \g(v_k)
  \end{cases}.
\end{equation}
In the Local Policy Search (LPS) context on which we focus in this paper, we further need to specify the space where we are going to perform the search. We write $\Pi$ this set and assume that it is a \emph{convex} subset of $(\Delta_\A)^\s$. For a predefined distribution $\nu$ of interest, the problem addressed by LPS  can be cast as follows:
\begin{equation}
  \text{find } \pi \in \Pi \text{ such that } \pi \text{ is a local maximum of } J_\nu(\pi) = \E_{s\sim \nu}[v_\pi(s)].
\end{equation}
Assume that we are able to (approximately) find such a locally optiaml policy $\pi$. A natural question is: how close is $v_\pi$ to $v_* = v_{\pi_*}$? Quite surprisingly, and in contrast with most optimization problems, we will provide a generic answer to this question; this is the aim of the next section.


\section{Main result}
\label{sec:main}

In order to state our main result, we need to first define precisely what we mean by approximate local optimum.
For any pair of policies $\pi$ and $\pi'$ and coefficient $\alpha \in (0,1)$, we write $(1-\alpha)\pi + \alpha \pi'$ the stochastic mixture of $\pi$ and $\pi'$ with weights $(1-\alpha)$ and $\alpha$.
\begin{definition}[$\epsilon$-local optimum]
\label{def:local_opt}
  We say that a policy $\pi\in\Pi$ is an $\epsilon$-local optimum of $J_\nu(\pi)$ if:
  \begin{equation}
    \forall \pi' \in \Pi, \quad \lim_{\alpha\rightarrow 0} \frac{\nu v_{(1-\alpha)\pi + \alpha \pi'} - \nu v_\pi}{\alpha} \leq \epsilon.
  \end{equation}
\end{definition}
This condition is for instance satisfied when $\|\nabla_\pi J_\nu(\pi)\|_\infty \leq \epsilon$, it states that the gradient is ``small enough''. Notice that the assumption of a convex policy space is necessary for this definition.

Then, we define a relaxation of the set of policies that are greedy with respect to some given policy.
\begin{definition}[$\mu$-weighted $\epsilon$-greedy policies]
\label{def:relaxed_Bellman}
  We write $\g_\Pi(\pi,\mu,\epsilon)$ the set of policies which are $\epsilon$-greedy respectively to $\pi$ (in $\mu$-expectation). It is formally defined as
  \begin{equation}
    \g_\Pi(\pi,\mu,\epsilon) = \left\{ \pi' \in \Pi \text{ such that } \forall \pi'' \in \Pi, \; \mu T_{\pi'}v_\pi + \epsilon \geq \mu T_{\pi''}v_\pi\right\}.
  \end{equation}
\end{definition}
This is indeed a relaxation of $\g$, as it can be observed that for all policies $\pi$ and $\pi'$,
$$
\pi' \in \g(\pi)\quad\Leftrightarrow\quad\forall \mu\in\Delta_\s,~\pi' \in   \g_\Pi(\pi,\mu,0) \quad\Leftrightarrow\quad\exists \mu\in\Delta_\s,~ \mu>0,~ \pi' \in   \g_\Pi(\pi,\mu,0).
$$

We are now ready to state the first important result, which links the $\epsilon$-local optimality of Definition~\ref{def:local_opt} to some relaxed optimality characterization involving Definition~\ref{def:relaxed_Bellman}.
\begin{theorem}
  \label{th:localOpt_relaxedBellman}
  The policy $\pi\in\Pi$ is an $\epsilon$-local maximum of $J_\nu(\pi)$ if and only if it is $(1-\gamma)\epsilon$-greedy respectively to itself, in $d_{\nu,\pi}$-expectation:
  \begin{equation}
    \forall \pi' \in \Pi, \quad \lim_{\alpha\rightarrow 0} \frac{\nu v_{(1-\alpha)\pi + \alpha \pi'} - \nu v_\pi}{\alpha} \leq \epsilon
    \quad\quad \Leftrightarrow \quad\quad \pi \in \g_\Pi(\pi, d_{\nu,\pi},(1-\gamma)\epsilon).
  \end{equation}
\end{theorem}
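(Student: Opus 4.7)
The plan is to reduce both sides of the claimed equivalence to the same inequality by computing the one-sided directional derivative of $\alpha \mapsto \nu v_{(1-\alpha)\pi + \alpha \pi'}$ at $\alpha = 0$ explicitly. Writing $\pi_\alpha = (1-\alpha)\pi + \alpha \pi'$, the first observation is that the Bellman operator is affine in the policy: since $R_{\pi_\alpha} = (1-\alpha) R_\pi + \alpha R_{\pi'}$ and $P_{\pi_\alpha} = (1-\alpha) P_\pi + \alpha P_{\pi'}$, we have $T_{\pi_\alpha} v = (1-\alpha) T_\pi v + \alpha T_{\pi'} v$ for every $v$. This is the only structural fact I need from the convex mixture.

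Next, I will invoke the standard performance-difference identity, obtained by writing $v_{\pi_\alpha} = (I - \gamma P_{\pi_\alpha})^{-1} R_{\pi_\alpha}$ and subtracting $v_\pi$:
\begin{equation}
v_{\pi_\alpha} - v_\pi = (I - \gamma P_{\pi_\alpha})^{-1}(T_{\pi_\alpha} v_\pi - v_\pi).
\end{equation}
Since $v_\pi = T_\pi v_\pi$, the affinity above yields $T_{\pi_\alpha} v_\pi - v_\pi = \alpha (T_{\pi'} v_\pi - T_\pi v_\pi)$. Left-multiplying by $\nu$ and using $(1-\gamma)\nu(I-\gamma P_{\pi_\alpha})^{-1} = d_{\nu, \pi_\alpha}$, this gives
\begin{equation}
\frac{\nu v_{\pi_\alpha} - \nu v_\pi}{\alpha} = \frac{1}{1-\gamma}\, d_{\nu, \pi_\alpha}\bigl(T_{\pi'} v_\pi - T_\pi v_\pi\bigr).
\end{equation}

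Now I let $\alpha \to 0$. Since $\alpha \mapsto P_{\pi_\alpha}$ is affine and $I - \gamma P_{\pi_\alpha}$ is invertible for every $\alpha$ (the spectral radius of $\gamma P_{\pi_\alpha}$ is at most $\gamma < 1$), the map $\alpha \mapsto d_{\nu,\pi_\alpha}$ is continuous at $0$, so
\begin{equation}
\lim_{\alpha \to 0}\frac{\nu v_{\pi_\alpha} - \nu v_\pi}{\alpha} = \frac{1}{1-\gamma}\, d_{\nu,\pi}\bigl(T_{\pi'} v_\pi - T_\pi v_\pi\bigr).
\end{equation}

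From here the equivalence is immediate. The $\epsilon$-local optimality condition reads: for all $\pi' \in \Pi$, the above limit is at most $\epsilon$, which after multiplying by $1-\gamma$ is equivalent to
\begin{equation}
\forall \pi' \in \Pi, \quad d_{\nu,\pi} T_{\pi'} v_\pi \leq d_{\nu,\pi} T_\pi v_\pi + (1-\gamma)\epsilon,
\end{equation}
which is exactly the statement that $\pi \in \g_\Pi(\pi, d_{\nu,\pi}, (1-\gamma)\epsilon)$ (with $\pi'$ playing the role of $\pi''$ in Definition~\ref{def:relaxed_Bellman}). There is no real obstacle beyond the derivative computation; the affinity of $T_{\pi_\alpha}$ in $\alpha$ and the continuity of $(I-\gamma P_{\pi_\alpha})^{-1}$ do all the work, and the rest is algebraic rearrangement so that the factor $1-\gamma$ is absorbed from the discounted occupancy measure.
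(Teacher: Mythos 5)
Your proof is correct and follows essentially the same route as the paper: the performance-difference identity $v_{\pi_\alpha}-v_\pi=(I-\gamma P_{\pi_\alpha})^{-1}(T_{\pi_\alpha}v_\pi-v_\pi)$ combined with the affinity of $T_{\pi_\alpha}v_\pi$ in $\alpha$, followed by passing to the limit. The only (cosmetic) difference is that you factor out $\alpha$ first and invoke continuity of $\alpha\mapsto d_{\nu,\pi_\alpha}$, whereas the paper writes the explicit first-order expansion $(I-\gamma P_{\pi_\alpha})^{-1}=(I-\gamma P_\pi)^{-1}(I+\alpha M)$; both yield the same limit $\frac{1}{1-\gamma}d_{\nu,\pi}(T_{\pi'}v_\pi-v_\pi)$ and hence the stated equivalence.
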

The following technical (but simple) lemma will be useful for the proof.
\begin{lemma}
  \label{lemma:tech}
  For any policies $\pi$ and $\pi'$, we have
  \begin{equation}
    v_{\pi'} - v_{\pi} = (I-\gamma P_{\pi'})^{-1} (T_{\pi'}v_\pi - v_\pi).
  \end{equation}
\end{lemma}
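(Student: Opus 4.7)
The plan is to prove this by a direct algebraic manipulation using only the fixed-point characterization $v_{\pi'} = T_{\pi'} v_{\pi'}$ and the definition $T_{\pi'} v = \R_{\pi'} + \gamma P_{\pi'} v$. The identity is linear in the value functions, so no iterative or series argument is really needed; the matrix inverse $(I - \gamma P_{\pi'})^{-1}$ is well-defined since $\gamma < 1$ and $P_{\pi'}$ is stochastic.

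First I would write $v_{\pi'} - v_\pi = T_{\pi'} v_{\pi'} - v_\pi$, using that $v_{\pi'}$ is the fixed point of $T_{\pi'}$. Then I would add and subtract $T_{\pi'} v_\pi$ to obtain
\begin{equation}
v_{\pi'} - v_\pi = (T_{\pi'} v_{\pi'} - T_{\pi'} v_\pi) + (T_{\pi'} v_\pi - v_\pi).
\end{equation}
Since $T_{\pi'}$ is affine with linear part $\gamma P_{\pi'}$, the first parenthesis equals $\gamma P_{\pi'} (v_{\pi'} - v_\pi)$. Rearranging gives
\begin{equation}
(I - \gamma P_{\pi'})(v_{\pi'} - v_\pi) = T_{\pi'} v_\pi - v_\pi.
\end{equation}

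Finally I would left-multiply by $(I - \gamma P_{\pi'})^{-1}$ to conclude the claimed identity. There is essentially no obstacle: the only thing to justify is the invertibility of $I - \gamma P_{\pi'}$, which follows from $\|\gamma P_{\pi'}\|_\infty = \gamma < 1$ so that the Neumann series $\sum_{t\geq 0}(\gamma P_{\pi'})^t$ converges and equals the inverse. This is already implicit in the definition of $d_{\mu,\pi}$ given earlier in Section~\ref{sec:background}.
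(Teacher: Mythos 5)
Your proof is correct and is essentially the same one-line algebraic argument as the paper's: both rest on the linear Bellman equation and the invertibility of $I-\gamma P_{\pi'}$, the only cosmetic difference being that you apply $(I-\gamma P_{\pi'})$ to the difference $v_{\pi'}-v_\pi$ and then invert, whereas the paper starts from $v_{\pi'}=(I-\gamma P_{\pi'})^{-1}\R_{\pi'}$ and factors the inverse out directly.
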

\begin{proof}
  The proof uses the fact that the linear Bellman Equation $v_\pi=r_\pi + \gamma P_\pi v_\pi$ implies $v_\pi=(I-\gamma P_\pi)^{-1}r_\pi$. Then,
  \begin{align}
    v_{\pi'} - v_\pi &= (I-\gamma P_{\pi'})^{-1} \R_{\pi'} - v_\pi
     =  (I-\gamma P_{\pi'})^{-1}(\R_{\pi'} + \gamma P_{\pi'}v_\pi - v_{\pi})
    \\
    &= (I-\gamma P_{\pi'})^{-1}(T_{\pi'}v_\pi - v_{\pi}). \qedhere
  \end{align}
\end{proof}
\begin{proof}[Proof of Theorem~\ref{th:localOpt_relaxedBellman}]
  For any $\alpha$ and any $\pi'\in\Pi$, write $\pi_\alpha = (1-\alpha)\pi + \alpha \pi'$. Using Lemma~\ref{lemma:tech}, we have:
  \begin{equation}
    \nu (v_{\pi_\alpha} - v_\pi) = \nu (I-\gamma P_{\pi_\alpha})^{-1}(T_{\pi_\alpha}v_\pi - v_\pi).
  \end{equation}
  By observing that $r_{\pi_\alpha}=(1-\alpha)r_\pi + \alpha r_{\pi'}$ and $P_{\pi_\alpha}=(1-\alpha)P_\pi + \alpha P_{\pi'}$, it can be seen that $T_{\pi_\alpha} v_{\pi}=(1-\alpha)T_{\pi}v_\pi + \alpha T_{\pi'}v_\pi$. Thus, using the fact that $v_\pi=T_\pi v_\pi$,  we get:
  \begin{align}
    T_{\pi_\alpha} v_{\pi} - v_\pi &= (1-\alpha)T_{\pi}v_\pi + \alpha T_{\pi'} v_\pi - v_\pi \\
& = \alpha (T_{\pi'} v_{\pi} - v_\pi).
\end{align}
In parallel, we have
    \begin{align}
    (I-\gamma P_{\pi_\alpha})^{-1} &= (I - \gamma P_\pi + \alpha \gamma (P_{\pi'}-P_{\pi}))^{-1} \\
 &= (I-\gamma P_\pi)^{-1}(I+\alpha M),
  \end{align}
  where the exact form of the matrix $M$ does not matter. Put together, we obtain
  \begin{equation}
    \nu (v_{\pi_\alpha} - v_\pi) = \alpha \nu (I-\gamma P_\pi)^{-1} (T_{\pi'} v_{\pi} - v_\pi) + o(\alpha^2).
  \end{equation}
  Taking the limit, we obtain
  \begin{equation}
    \lim_{\alpha\rightarrow 0} \frac{\nu(v_{\pi_\alpha} - v_\pi)}{\alpha} = \nu (I-\gamma P_\pi)^{-1} (T_{\pi'} v_{\pi} - v_\pi) = (1-\gamma) d_{\nu,\pi} (T_{\pi'}v_\pi - v_\pi),
  \end{equation}
  from which the stated result follows.
\end{proof}

With Theorem~\ref{th:localOpt_relaxedBellman}, we know that if the LPS algorithm has produced a policy $\pi$ that is an $\epsilon$-local maximum, then it satisfies for some distribution $\mu$
\begin{align}
\pi \in \g_\Pi(\pi,\mu,\epsilon),
\label{relaxedoptimality}
\end{align}
that can be seen as a relaxed version of the original optimality characterization of Equation~\eqref{optimality}.
As we are about to see, in the Theorem to come next, such a relaxed optimality characterization can be shown to imply a global guarantee.
To state this result, we first need to define the ``$\nu$-greedy-complexity'' of our policy space, which measure how good $\Pi$ was designed so as to approximate the greedy operator, for a starting distribution $\nu$.
\begin{definition}[$\nu$-greedy-complexity]
  \label{def:policySpaceComplexit}
  We define $\e_\nu(\Pi)$ the $\nu$-greedy-complexity  of the policy space $\Pi$ as
  \begin{equation}
    \e_\nu(\Pi) = \max_{\pi \in \Pi} \min_{\pi'\in\Pi}\left(d_{\nu,\pi} \left(T v_\pi - T_{\pi'} v_\pi\right)\right).
  \end{equation}
\end{definition}
It is clear that if $\Pi$ contains any deterministic policy (a strong assumption), then $\e_\nu(\Pi)=0$.
Given this definition, we are ready to state our second important result.
\begin{theorem}
  \label{th:relaxedBellman_globalOpt}
  If $\pi \in \g_\Pi(\pi,d_{\nu,\pi},\epsilon)$, then for any policy $\pi'$ and for any distribution $\mu$ over $\s$, we have
  \begin{equation}
    \mu v_{\pi'} \leq \mu v_{\pi} + \frac{1}{(1-\gamma)^2} \C{\pi'}(\e_\nu(\Pi) + \epsilon).
  \end{equation}
\end{theorem}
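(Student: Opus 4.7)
My plan is to chain four elementary inequalities that successively change the measure against which $T v_\pi - v_\pi$ is integrated, going from the unknown $d_{\mu,\pi'}$ (which is determined by the comparison policy $\pi'$) down to $d_{\nu,\pi}$ (against which the hypothesis is stated), losing one $\frac{1}{1-\gamma}$ factor at each "transfer" step.

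First, I would apply Lemma~\ref{lemma:tech} to the pair $(\pi',\pi)$ and left-multiply by $\mu$. Using $\mu (I-\gamma P_{\pi'})^{-1}=\frac{1}{1-\gamma}d_{\mu,\pi'}$, this gives the identity
\begin{equation}
\mu v_{\pi'} - \mu v_\pi = \frac{1}{1-\gamma}\, d_{\mu,\pi'}\bigl(T_{\pi'} v_\pi - v_\pi\bigr).
\end{equation}
Since $T v_\pi$ is the componentwise maximum of $T_{\pi'} v_\pi$ over all $\pi'$ and $d_{\mu,\pi'}$ is a non-negative measure, I can upper-bound $T_{\pi'} v_\pi - v_\pi$ pointwise by $T v_\pi - v_\pi$, which is itself non-negative (because $v_\pi = T_\pi v_\pi \leq T v_\pi$). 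This non-negativity is what makes the concentrability step clean.

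Next, I transfer from $d_{\mu,\pi'}$ to $\nu$ by the very definition of $\left\|\frac{d_{\mu,\pi'}}{\nu}\right\|_\infty$: since $T v_\pi - v_\pi\geq 0$ componentwise,
\begin{equation}
d_{\mu,\pi'}\bigl(T v_\pi - v_\pi\bigr) \leq \C{\pi'}\,\nu\bigl(T v_\pi - v_\pi\bigr).
\end{equation}
I then transfer from $\nu$ to $d_{\nu,\pi}$ using the simple observation that $d_{\nu,\pi}=(1-\gamma)\sum_{t\geq 0}(\gamma P_\pi)^t \nu\geq (1-\gamma)\nu$ componentwise, hence $\nu\leq \frac{1}{1-\gamma}d_{\nu,\pi}$, and again using the non-negativity of $T v_\pi - v_\pi$. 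This costs a second $\frac{1}{1-\gamma}$ factor and accounts for the $(1-\gamma)^{-2}$ in the statement.

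It remains to bound $d_{\nu,\pi}(T v_\pi - v_\pi) = d_{\nu,\pi}(T v_\pi - T_\pi v_\pi)$ by $\e_\nu(\Pi)+\epsilon$. Here I pick a $\bar\pi\in\Pi$ that (approximately) achieves the minimum in the definition of $\e_\nu(\Pi)$ for our fixed $\pi$, so that $d_{\nu,\pi}(T v_\pi - T_{\bar\pi} v_\pi)\leq \e_\nu(\Pi)$, and then I insert $T_{\bar\pi}v_\pi$ in the middle:
\begin{equation}
d_{\nu,\pi}(T v_\pi - T_\pi v_\pi) = d_{\nu,\pi}(T v_\pi - T_{\bar\pi} v_\pi) + d_{\nu,\pi}(T_{\bar\pi} v_\pi - T_\pi v_\pi).
\end{equation}
The first term is at most $\e_\nu(\Pi)$ by the choice of $\bar\pi$, while the second is at most $\epsilon$ by the assumption $\pi\in\g_\Pi(\pi,d_{\nu,\pi},\epsilon)$ applied with $\pi''=\bar\pi$. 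Combining all four steps yields exactly the stated bound. The only subtle point, and the one I would pay attention to, is making sure each measure against which I am integrating is non-negative at the moment I apply the corresponding inequality; this is automatic here because each of $d_{\mu,\pi'}$, $\nu$, $d_{\nu,\pi}$ is a probability, and $T v_\pi - v_\pi\geq 0$.
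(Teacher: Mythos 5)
Your proof is correct and follows essentially the same route as the paper's: the same application of Lemma~\ref{lemma:tech}, the same two measure-transfer steps via $\C{\pi'}$ and $d_{\nu,\pi}\geq(1-\gamma)\nu$, and the same final splitting of $d_{\nu,\pi}(Tv_\pi-v_\pi)$ by inserting the (near-)minimizer $T_{\bar\pi}v_\pi$, which is exactly the paper's insertion of $\max_{\pi''\in\Pi}d_{\nu,\pi}T_{\pi''}v_\pi$. Your explicit attention to the non-negativity of $Tv_\pi-v_\pi$ at each change of measure matches the paper's justification.
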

Notice that this theorem is actually a slight\footnote{Theorem~\ref{th:relaxedBellman_globalOpt} holds for any policy $\pi'$, not only for the optimal one, and the error term is split up (which is necessary to provide a more general result).} generalization of Theorem~6.2 of \cite{kakade:2002}. We provide the proof, that is essentially the same as that given in \cite{kakade:2002}, for the sake of completeness.
\begin{proof}
  Using again Lemma~\ref{lemma:tech} and the fact that $T v_\pi \ge T_{\pi'} v_\pi$, we have
  \begin{align}
    \mu (v_{\pi'}-v_\pi) &= \mu(I-\gamma P_{\pi'})^{-1}(T_{\pi'}v_\pi - v_\pi)
    = \frac{1}{1-\gamma} d_{\mu,\pi'} (T_{\pi'}v_\pi - v_\pi)
    \leq \frac{1}{1-\gamma} d_{\mu,\pi'}  (T v_\pi - v_\pi).
  \end{align}
  Since $T v_\pi - v_\pi \ge 0$ and $d_{\nu,\pi} \geq (1-\gamma)\nu$, we get
  \begin{align}
    \mu (v_{\pi'}-v_\pi) &\leq \frac{1}{1-\gamma} \C{\pi'} \nu (T v_\pi - v_\pi)
    \leq \frac{1}{(1-\gamma)^2}  \C{\pi'}  d_{\nu,\pi} (T v_\pi - v_\pi).
  \end{align}
  Finally, using $d_{\nu,\pi} (T v_\pi - v_\pi)= (d_{\nu,\pi}  T v_\pi - d_{\nu,\pi}  v_\pi)$, we obtain
  \begin{align}
    \mu (v_{\pi'}-v_\pi)
    &\leq \frac{1}{(1-\gamma)^2} \C{\pi'} (d_{\nu,\pi}  T v_\pi - \max_{\pi'\in \Pi} d_{\nu,\pi} T_{\pi'}v_\pi + \max_{\pi'\in \Pi} d_{\nu,\pi} T_{\pi'}v_\pi - d_{\nu,\pi}  v_\pi)
    \\
    &\leq \frac{1}{(1-\gamma)^2} \C{\pi'} (\e_\nu(\Pi) + \epsilon) \qedhere
  \end{align}
\end{proof}

The main result of the paper is a straightforward combination of the results of both Theorems.
\begin{theorem}
  \label{th:mainresult}
  Assume that the policy $\pi$ is an $\epsilon$-local optimum of $J_\nu(\pi)$ over $\Pi$, that is
  \begin{equation}
    \forall \pi' \in \Pi, \quad \lim_{\alpha\rightarrow 0} \frac{J_\nu(\pi_\alpha) - J_\nu(\pi)}{\alpha} \leq \epsilon \quad \text{ (with $\pi_\alpha = (1-\alpha)\pi + \alpha \pi'$)},
  \end{equation}
  then, $\pi$ enjoys the following global performance guarantee:
  \begin{equation}
    0\leq \E_{s\sim\mu}[v_*(s)-v_\pi(s)] \leq \frac{1}{1-\gamma} \C{\pi_*} \left(\frac{\e_\nu(\Pi)}{1-\gamma} + \epsilon \right).
  \end{equation}
\end{theorem}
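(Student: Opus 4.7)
The proof is a direct chaining of the two previous theorems, as the preamble to the statement already hints. The plan is to first translate the $\epsilon$-local optimality hypothesis into the relaxed greediness characterization of Definition~\ref{def:relaxed_Bellman}, and then feed that into the global performance bound of Theorem~\ref{th:relaxedBellman_globalOpt} with $\pi'$ specialized to the optimal policy $\pi_*$.

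More concretely, I would first invoke Theorem~\ref{th:localOpt_relaxedBellman}: the hypothesis that $\pi$ is an $\epsilon$-local optimum of $J_\nu$ over the convex set $\Pi$ is equivalent to
\begin{equation}
  \pi \in \g_\Pi(\pi, d_{\nu,\pi}, (1-\gamma)\epsilon).
\end{equation}
This is the step that converts a first-order optimality condition in policy space into a relaxed Bellman-type condition with respect to the occupancy measure $d_{\nu,\pi}$, which is exactly the input form required by Theorem~\ref{th:relaxedBellman_globalOpt}.

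Next, I would apply Theorem~\ref{th:relaxedBellman_globalOpt} with this policy $\pi$, with the distribution $\mu$ given in the statement, and with $\pi' := \pi_*$. Plugging in the tolerance $(1-\gamma)\epsilon$ in place of $\epsilon$ there, we obtain
\begin{equation}
  \mu v_{\pi_*} \leq \mu v_\pi + \frac{1}{(1-\gamma)^2} \C{\pi_*} \bigl(\e_\nu(\Pi) + (1-\gamma)\epsilon\bigr),
\end{equation}
which after factoring $1/(1-\gamma)$ out of the parenthesis yields exactly the upper bound announced in Theorem~\ref{th:mainresult}.

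Finally, the lower bound $0 \leq \E_{s\sim \mu}[v_*(s) - v_\pi(s)]$ is just the pointwise optimality of $\pi_*$, namely $v_*(s) \geq v_\pi(s)$ for every state $s$, integrated against $\mu$. There is no genuine obstacle here: the real content of the result sits in the two preceding theorems, so the only thing to be careful about is bookkeeping the factor $(1-\gamma)$ coming from the conversion between the local-optimum tolerance and the $d_{\nu,\pi}$-weighted greediness tolerance, which is precisely what produces the asymmetric $\e_\nu(\Pi)/(1-\gamma) + \epsilon$ structure inside the parenthesis of the final bound.
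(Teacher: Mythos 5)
Your proof is correct and matches the paper's intent exactly: the paper itself dispenses with a written proof, stating only that Theorem~\ref{th:mainresult} is ``a straightforward combination of the results of both Theorems,'' which is precisely the chaining you carry out. Your bookkeeping of the $(1-\gamma)$ factor --- feeding the tolerance $(1-\gamma)\epsilon$ from Theorem~\ref{th:localOpt_relaxedBellman} into Theorem~\ref{th:relaxedBellman_globalOpt} and then factoring $1/(1-\gamma)$ out of the parenthesis --- is the only nontrivial detail, and you handle it correctly.
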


\section{Discussion}
\label{sec:discussion}

We have just shown that \emph{any policy search algorithm} that is able to estimate any $\epsilon$-close local optimum of the fitness function $J_\nu(\pi) = \E_{s\sim\nu}[v_\pi(s)]$ \emph{actually comes with a global performance guarantee}. In this section, we discuss the relations of our analyses with previous works, we compare this guarantee with the standard ones of approximate dynamic programming (focusing particularly on the DPI algorithm) and we discuss some practical and theoretical consequences of our analysis.

\subsection{Closely related analysis}
\label{subsec:relatedResults}

Though the main result of our paper is Theorem~\ref{th:mainresult}, and
since Theorem~\ref{th:relaxedBellman_globalOpt} appears in a very
close form in \cite{kakade:2002}, our main technical contribution is
Theorem~\ref{th:localOpt_relaxedBellman} that highlights a deep
connection between local optimality and a relaxed Bellman optimality
characterization. A result, that is similar in flavor, is derived by
\cite{Kakade:2001} for the Natural Policy Gradient algorithm: Theorem
3 there shows that natural gradient updates are moving the policy
towards the solution of a (DP) update. The author even writes: ``The
natural gradient could be efficient far from the maximum, in that it
is pushing the policy toward choosing greedy optimal actions''. Though
there is an obvious connection with our work, the result there is
limited since 1) it seems to be specific to the natural gradient
approach (though our result is very general), and 2) it is not
exploited so as to connect with a global performance guarantee.

A performance guarantee very similar to the one we provide in
Theorem~\ref{th:relaxedBellman_globalOpt} was first derived for CPI in
\cite{kakade:2002}. The main difference is that the term $(\e_\nu(\Pi)
+ \epsilon)$ of Theorem~\ref{th:relaxedBellman_globalOpt} is replaced
there by some global precision $\epsilon$, that corresponds to the
error made by a classifier that is used as an approximate greedy policy
chooser.  Similarly to the work we have just mentioned on the Natural
Policy Gradient, this result of the literature was certainly considered specific to the
CPI algorithm, that has unfortunately not been used widely in practice
probably because of its somewhat complex
implementation. In contrast, we show in this paper that such a performance
guarantee is valid for any method that finds a policy that satisfies a
relaxed Bellman identity like that given
Equation~\eqref{relaxedoptimality}, among which one finds many widely
used methods that do Local Policy Search.

\subsection{Relations to bounds of approximate dynamic programming}

The performance guarantee of any approximate dynamic programming algorithm implies \emph{(i)} a (quadratic) dependency on the average horizon $\frac{1}{1-\gamma}$, \emph{(ii)} a concentration coefficient (which quantifies the divergence between the worst discounted average future state distribution when starting from the measure of interest, and the distribution used to control the estimation errors), and \emph{(iii)} an error term linked to the estimation error encountered at each iteration (which can be due to the approximation of value functions and/or policies). Depending on what quantity is estimated, a comparison of these estimation errors may be hard. To ease the comparison, the following discussion focuses on the Direct Policy Iteration algorithm that does some form of policy search. Note however that several aspects of our comparison holds for other ADP algorithms.

Direct Policy Iteration (DPI) \citep{Lagoudakis:2003a,lazaric:2010} is an approximate policy iteration algorithm where at each iteration, $(i)$ the value function is estimated for a set of states using Monte Carlo rollouts and $(ii)$ the greedy policy (respectively to the current value function) is approximately chosen in some predefined policy space (through a weighted classification problem). Write $\pol$ this policy space, which is typically a set of \emph{deterministic policies}. For an initial policy $\pi_0$ and a given distribution $\nu$, the DPI algorithms iterates as follows:
\begin{equation}
  \text{pick }\pi_{k+1}\in\pol \text{ such as (approximately) minimizing  } \nu(T v_{\pi_k} - T_{\pi_{k+1}} v_{\pi_k}).
\end{equation}
To provide the DPI bound, we need an alternative concentration coefficient as well as some new error characterizing $\pol$.
Let $C_{\mu,\nu}^*$ be the concentration coefficient defined as
\begin{equation}
  C_{\mu,\nu}^* = (1-\gamma)^2 \sum_{i=0}^\infty \sum_{j=0}^\infty \gamma^{i+j} \sup_{\pi\in\A^\s} \left\|\frac{\mu(P_{\pi_*})^i (P_\pi)^j}{\nu}\right\|_\infty.
\end{equation}
We need also a measure of the complexity of the policy space $\pol$, similar to $\e_\nu$:
\begin{equation}
  \e'_\nu(\pol) = \max_{\pi\in\pol}\min_{\pi'\in\pol}(\nu(T v_\pi - T_{\pi'}v_\pi))
\end{equation}
Let also $e$ be an estimation error term, which depends on the number of samples and which tends to zero as the number of samples tends to infinity (at a rate depending on the chosen classifier).
The performance guarantee of DPI \citep{lazaric:2010,Ghavamzadeh:12} can be expressed as follows:
\begin{equation}
  \limsup_{k\rightarrow\infty}\mu(v^* - v_{\pi_k}) \leq \frac{C_{\mu,\nu}^*}{(1-\gamma)^2}(\e'_\nu(\pol) + e).
\end{equation}
This bound is to be compared with the result of Theorem~\ref{th:mainresult}, regarding the three terms involved: the average horizon, the concentration coefficient and the greedy error term. Each term is discussed now, a brief summary being provided in Table~\ref{tab:bound}. As said in Section~\ref{subsec:relatedResults}, the LPS bound is really similar to the CPI one, and the CPI and DPI bounds have been extensively compared in \citep{Ghavamzadeh:12}. Our discussion can be seen as complementary.

\begin{table}
  \begin{center}
  \caption{Comparison of the performance guarantees for LPS and DPI}
  \label{tab:bound}
  \begin{tabular}{|l||c|c|c|c|}
    \hline
     & bounded term & horizon term & concentration term & error term
    \\ \hline \hline
    LPS & $\mu(v_* - v_\pi)$ & $\frac{1}{(1-\gamma)^2}$ & $\left\|\frac{d_{\mu,\pi_*}}{\nu}\right\|_\infty$ & $\e_\nu(\Pi) + \epsilon (1-\gamma)$
    \\ \hline
    DPI & ${\small \limsup_{k\rightarrow\infty}\mu(v^* - v_{\pi_k})}$ &  $\frac{1}{(1-\gamma)^2}$ & $C_{\mu,\nu}^*$ & $\e'_\nu(\pol) + e$
    \\
    \hline
  \end{tabular}
  \end{center}
\end{table}


\textbf{Horizon term.} Both bounds have a quadratic dependency on the average horizon $\frac{1}{1-\gamma}$. For approximate dynamic programming, this bound can be shown to be tight \citep{scherrer:12_ampi}, the only known solution to improve this being to introduce non-stationary policies \citep{scherrer:2012_nips}. The tightness of this bound for policy search is an open question. However, we suggest later in Section~\ref{sec:betterlearning} a possible way to improve this.

\textbf{Concentration coefficients.} Both bounds involve a concentration coefficient. Even if they are different, they can be linked.
\begin{theorem}
  \label{th:concentration}
  We always have that:
  $
  \C{\pi_*} \leq \frac{1}{1-\gamma} C_{\mu,\nu}^*.
$
  Moreover, if there always exists a $\nu$ such that $\C{\pi_*}< \infty$ (with $\nu = d_{\mu,\pi_*}$), there might not exist a $\nu$ such that $C_{\mu,\nu}^*<\infty$.
\end{theorem}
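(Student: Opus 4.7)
My plan is to treat the two halves of the statement separately. For the inequality, I start by expanding the discounted occupancy measure as a Neumann series $d_{\mu,\pi_*} = (1-\gamma)\sum_{i\geq 0}\gamma^i \mu(P_{\pi_*})^i$. The functional $\xi \mapsto \|\xi/\nu\|_\infty$ is subadditive on nonnegative row vectors (if $\xi_1 \leq C_1\nu$ and $\xi_2 \leq C_2\nu$ componentwise then $\xi_1+\xi_2 \leq (C_1+C_2)\nu$), so the triangle inequality gives
\begin{equation}
\C{\pi_*} \leq (1-\gamma)\sum_{i\geq 0}\gamma^i \left\|\frac{\mu(P_{\pi_*})^i}{\nu}\right\|_\infty.
\end{equation}
The right-hand side is, up to the factor $(1-\gamma)$, exactly the $j=0$ slice of the double series defining $C_{\mu,\nu}^*$, and the supremum over $\pi$ inside that slice is vacuous since $\mu(P_{\pi_*})^i$ does not depend on $\pi$. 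Dropping the (nonnegative) $j\geq 1$ terms then gives $C_{\mu,\nu}^* \geq (1-\gamma)\,\C{\pi_*}$, which is the stated bound.

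For the second part, the trivial choice $\nu=d_{\mu,\pi_*}$ gives $\C{\pi_*}=1$, so the left-hand quantity can always be made finite. For the negative half, my plan is to exhibit an MDP on a countably infinite state space (covered by the paper's footnote allowing the extension to infinite state spaces) in which the set of states reachable in $j$ arbitrary steps grows faster than $\gamma^{-j}$. A clean choice is a deterministic rooted tree MDP with branching factor $b$ large enough that $\gamma b > 1$, with $\mu$ the Dirac at the root and one action per child. Any depth-$j$ node is a Dirac measure achievable by some deterministic policy, so for any probability distribution $\nu$ on $\s$,
\begin{equation}
\sup_\pi \left\|\frac{\mu(P_\pi)^j}{\nu}\right\|_\infty \geq \frac{1}{\min_{|s|=j}\nu(s)} \geq b^j,
\end{equation}
since the $b^j$ nodes at depth $j$ share a total $\nu$-mass at most $1$. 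Retaining only the $i=0$ slice of the series yields $C_{\mu,\nu}^* \geq (1-\gamma)\sum_{j\geq 0}(\gamma b)^j = \infty$ for \emph{every} $\nu$.

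The main conceptual content sits entirely in the first half and amounts to nothing more than a Neumann expansion followed by a triangle inequality. The only delicate point is the second half, where one must verify that no clever choice of $\nu$ can rescue $C_{\mu,\nu}^*$: the exponential-branching argument above accomplishes this because the mass budget $\sum_s \nu(s)=1$ is incompatible with $\nu(s) \geq b^{-j}$ holding simultaneously at all $b^j$ depth-$j$ nodes. If one insists on staying strictly within finite state spaces, the same argument on a tree truncated at depth $N$ shows $C_{\mu,\nu}^*$ can be made $\Omega((\gamma b)^N)$ whereas $\C{\pi_*}$ remains equal to $1$ with $\nu=d_{\mu,\pi_*}$, giving the same qualitative separation in nonasymptotic form.
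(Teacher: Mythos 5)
Your proof is correct, and both halves take a mildly but genuinely different route from the paper's. For the inequality, the paper keeps the full double series, substitutes $\pi=\pi_*$ under the supremum, collapses $\sum_{i,j}\gamma^{i+j}\mu(P_{\pi_*})^{i+j}$ into $\frac{1}{1-\gamma}\sum_i\gamma^i d_{\mu,\pi_*}(P_{\pi_*})^i$, and then lower-bounds by the $i=0$ contribution; you instead keep only the $j=0$ slice and apply subadditivity of $\xi\mapsto\|\xi/\nu\|_\infty$ to the Neumann expansion of $d_{\mu,\pi_*}$. The two arguments rest on the same ingredients (Neumann series, nonnegativity, triangle inequality) and give the same constant, but yours is a cleaner bookkeeping since the supremum in the retained slice is vacuous. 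For the second part, the paper uses a single-state fan-out with $N$ actions leading to $N$ distinct states, shows the very first coefficient $\sup_\pi\|\mu P_\pi/\nu\|_\infty\geq N$ by the same mass-budget pigeonhole you use, and then lets $N\to\infty$; your $b$-ary tree instead keeps the action set \emph{finite} and pushes the blow-up into depth, at the price of requiring $\gamma b>1$ so the geometric series diverges. Your variant is arguably preferable on two counts: it respects the finite/compact action space hypothesis (the paper's example needs unboundedly many actions), and your closing remark about the depth-$N$ truncation makes explicit the point the paper glosses over — in any \emph{fixed} finite MDP $C^*_{\mu,\nu}$ is finite, so the honest finite-state statement is that the ratio $C^*_{\mu,\nu}/\C{\pi_*}$ is unbounded over a family of MDPs, with genuine infiniteness only in the countable-state limit. (The stray factor $(1-\gamma)$ versus $(1-\gamma)^2$ in your divergent series is immaterial.)
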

\begin{proof}
Let us first consider the inequality. By using the definition of $d_{\mu,\pi_*}$ and eventually the fact that $d_{\mu,\pi_*} \geq (1-\gamma) \nu$, we have
  \begin{align}
    C_{\mu,\nu}^* &= (1-\gamma)^2 \sum_{i=0}^\infty \sum_{j=0}^\infty \gamma^{i+j} \sup_{\pi\in(\Delta_\A)^\s} \left\|\frac{\mu(P_{\pi_*})^i (P_\pi)^j}{\nu}\right\|_\infty
    \\
    &\geq (1-\gamma)^2 \left\|\sum_{i,j=0}^\infty \gamma^{i+j} \frac{\mu (P_{\pi_*})^{i+j}}{\nu}\right\|_\infty
    = (1-\gamma) \left\|\sum_{i=0}^\infty \gamma^{i} \frac{d_{\mu,\pi_*} (P_{\pi_*})^{i}}{\nu}\right\|_\infty
    \\
    &\geq (1-\gamma)^2 \left\|\sum_{i=0}^\infty \gamma^{i} \frac{\mu(P_{\pi_*})^{i}}{\nu}\right\|_\infty
    = (1-\gamma) \C{\pi_*}.
  \end{align}

  For the second part of the theorem, consider an MDP with $N$ states and $N$ actions, with $\mu = \delta_1$ being a dirac on the first state, and such that from here action $a\in[1;N]$ leads in state $a$ deterministically. Write $c = \sup_{\pi\in\A^\s} \|\frac{\mu P_\pi}{\nu}\|_\infty$ the first term defining $C_{\mu,\nu}^*$. For any $\pi$, we have $\mu P_\pi\leq c\nu$. Thus, for any action $a$ we have $\delta_a \leq c \nu \Rightarrow 1 \leq c \nu(a)$. Consequently, $1 = \sum_{i=1}^N \nu(i) \geq \frac{1}{c} \sum_{i=1}^N 1 \Leftrightarrow c \geq N$. This being true for arbitrary $N\in\mathbb{N}$, we get $c=\infty$ and thus $C_{\mu,\nu}^* = \infty$.
\end{proof}
The first part of this result was already stated in \cite{Ghavamzadeh:12}, for the comparison of CPI (which involves the same concentration as LPS) and DPI. The second part is new: it tells that we may have $\C{\pi_*} \ll C_{\mu,\nu}^*$, which is clearly in favor of LPS (and CPI, as a side effect).

\textbf{Error terms.} Both bounds involve an error term. The terms $\epsilon$ (LPS) and $e$ (DPI) can be made arbitrarily small by increasing the computational effort (the time devoted to run the algorithm and the amount of samples used), though not much more can be said without studying a specific algorithmic instance (\textit{e.g.}, type of local search for LPS or type of classifier for DPI). The terms defining the ``greedy complexity'' of policy spaces can be more easily compared. Because they use different distributions that can be compared ($d_{\nu,\pi} \ge (1-\gamma) \nu$), we have for all policy spaces $\Pi$,
\begin{equation}
  \e'_{\nu}(\Pi) \leq \frac{\e_\nu(\Pi)}{1-\gamma}.
\end{equation}
However, this result (already stated in \cite{Ghavamzadeh:12}) does not take into account the fact that LPS (or CPI for the discussion of \cite{Ghavamzadeh:12}) works with \emph{stochastic policies} while DPI works with \emph{deterministic policies}. For example, assume that $\Pi$ is the convex closure of $\pol$. In this case, we would have $\e'_{\nu}(\pol) \leq \e'_{\nu}(\Pi)$. Therefore, this error term would be more in favor of DPI than LPS: the search space is presumably smaller (while possibly allowing to represent the same deterministic greedy policies).

%

\subsection{Practical and theoretical consequences of our analysis}

Finally, this section provides a few important consequences of our analysis and of Theorem~\ref{th:mainresult} in particular.

\textbf{Rich policy and equivalence between local and global optimality.}
If the the policy space is very rich, one can easily show that any local optimum is actually global (this result being a direct corollary of Theorem~\ref{th:mainresult}).
\begin{theorem}
  \label{cor:optGlob}
  Let $\nu>0$ be a distribution. Assume that the policy space is rich in the sense that $\e_\nu(\Pi)=0$, and that $\pi$ is an (exact) local optimum of $J_\nu$ ($\epsilon = 0$). Then, we have $v_\pi = v_*$.
\end{theorem}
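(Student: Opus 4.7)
The plan is to deduce this directly from Theorem~\ref{th:mainresult} by plugging in the two hypotheses $\epsilon = 0$ and $\e_\nu(\Pi) = 0$, and then reading off what the bound says.

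First I would invoke Theorem~\ref{th:mainresult}: since $\pi$ is an $\epsilon$-local optimum with $\epsilon = 0$, we get, for every distribution $\mu$,
\begin{equation}
0 \leq \mu(v_* - v_\pi) \leq \frac{1}{1-\gamma}\left\|\frac{d_{\mu,\pi_*}}{\nu}\right\|_\infty\left(\frac{\e_\nu(\Pi)}{1-\gamma} + \epsilon\right) = 0,
\end{equation}
where the last equality uses both hypotheses. Before doing this, however, I need to check that the concentration coefficient $\left\|\frac{d_{\mu,\pi_*}}{\nu}\right\|_\infty$ is finite, otherwise the right-hand side would be of the indeterminate form $\infty \cdot 0$. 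This is where the assumption $\nu > 0$ is used: since the state space $\s$ is finite and $\nu(s) > 0$ for every $s$, while $d_{\mu,\pi_*}$ is a probability distribution on $\s$, the componentwise ratio $\frac{d_{\mu,\pi_*}(s)}{\nu(s)}$ is bounded (by $\frac{1}{\min_s \nu(s)} < \infty$). Hence the concentration coefficient is finite and the product is unambiguously $0$.

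From $\mu(v_* - v_\pi) \leq 0$ holding for every distribution $\mu$, I would specialize to Dirac masses $\mu = \delta_s$ for each state $s \in \s$, obtaining $v_*(s) \leq v_\pi(s)$ for all $s$. Combined with the reverse inequality $v_\pi \leq v_*$, which holds by definition of the optimal value function, we conclude $v_\pi = v_*$, as claimed.

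There is no real obstacle here: the statement is essentially a direct corollary, the only subtlety being the (trivial, given $\nu > 0$ on a finite state space) finiteness check for the concentration coefficient that makes the product $\left\|\frac{d_{\mu,\pi_*}}{\nu}\right\|_\infty \cdot 0$ well-defined.
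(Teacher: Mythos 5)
Your proof is correct and is exactly the argument the paper intends: the result is stated there as a direct corollary of Theorem~\ref{th:mainresult}, obtained by setting $\epsilon = 0$ and $\e_\nu(\Pi) = 0$ and letting $\mu$ range over Dirac masses. Your explicit check that $\nu > 0$ on a finite state space makes $\left\|\frac{d_{\mu,\pi_*}}{\nu}\right\|_\infty$ finite is a worthwhile detail the paper leaves implicit.
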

If this result is well-known in the case of tabular policies, it is to our knowledge new in such a general case (acknowledging that $\e_\nu(\Pi)=0$ is a rather strong assumption).


\textbf{Choice of the sampling distribution.}
Provided the result of Theorem~\ref{th:mainresult}, and as also mentionned about CPI in \cite{kakade:2002} since it satisfies a similar bound, if one wants to optimize the policy according to a distribution $\mu$ (that is, such that $\mu(v_* - v_\pi)$ is small), then one should optimize the fitness $J_\nu$ with the distribution $\nu \simeq d_{\mu,\pi_*}$ (so as to minimize the coefficient $\C{\pi_*}$). Ideally, one should sample states based on trajectories following the optimal policy $\pi_*$ starting from states drawn according to $\mu$ (which is not surprising). This is in general not realistic since we do not know the optimal policy $\pi_*$,
but practical solutions may be envisioned.

First, this means that one should sample states in the ``interesting'' part of the state space, that is where the optimal policy is believed to lead. This is a  natural piece of information that a domain expert should be able to provide and this is in general much easier than actually controlling the system with the optimal policy (or with a policy that  leads to these interesting parts of the state space). Also, though we leave the precise study of this idea for future research, a natural practical approach for setting the distribution $\nu$ would be to compute a sequence of policies $\pi_1,\pi_2,\dots$ such that for all $i$, $\pi_i$ is a local optimum of $\pi \mapsto J_{d_{\nu,{\pi_{i-1}}}}(\pi)$, that is of the criterion weighted by the region visited by the previous policy $\pi_{i-1}$. It may particularly be interesting to study whether the convergence of such an iterative process leads to interesting guarantees.

One may also notice that Theorem~\ref{th:mainresult} may be straightforwardly written more generally for any policy. If $\pi$ is an $\epsilon$-local optimum of $J_\nu$ over $\Pi$, then for any stochastic policy $\pi'$ we have
\begin{equation}
  \mu v_{\pi'} \leq \mu v_\pi + \frac{1}{1-\gamma} \C{\pi'} \left(\frac{\e_\nu(\Pi)}{1-\gamma} + \epsilon\right).
\end{equation}
Therefore, one can sample trajectories according to an acceptable (and known) controller $\pi'$ so as to get state samples to optimize $J_{d_{\nu,\pi'}}$. More generally, if we know where a good policy $\pi'$ leads the system to from some initial distribution $\mu$, we can learn a policy $\pi$ that is guaranteed to be approximately as good (and potentially better).


\textbf{A better learning problem?}
\label{sec:betterlearning}
With the result of Theorem~\ref{th:mainresult}, we have a squared dependency of the bound on the effective average horizon $\frac{1}{1-\gamma}$. For approximate dynamic programming, it is known that this dependency is tight \citep{scherrer:2012_nips}. At the current time, this is an open question for policy search. However, we can somehow improve the bound. We have shown that the $\epsilon$-local optimality of a policy $\pi$ implies that it satisfies a relaxed Bellman global optimality characterization, $\pi \in \g_\Pi(\pi,d_{\nu,\pi},\epsilon)$, which in turns implies Theorem~\ref{th:mainresult}. The following result, involving a slightly simpler relaxed Bellman equation, can be proved similarly to Theorem~\ref{th:relaxedBellman_globalOpt}:
\begin{equation}
  \pi\in\g_\Pi(\pi,\nu,\epsilon) \quad \Leftrightarrow \quad  \mu v_{\pi'} \leq \mu v_\pi + \frac{1}{1-\gamma} \C{\pi'}(\e_\nu(\Pi)+\epsilon).
\end{equation}
A policy satisfying the left hand side would have an improved dependency on the horizon ($\frac{1}{1-\gamma}$ instead of $\frac{1}{(1-\gamma)^2}$). At the current time, we do not know whether there exists an efficient algorithm for computing a policy satisfying $\pi\in\g_\Pi(\pi,\nu,\epsilon)$. The above guarantee suggests that solving such a problem may improve over traditional policy search approaches.

\section{Conclusion}
\label{sec:conclusion}

In the past years, local policy search algorithms have been shown to be practical viable alternatives to the more traditional approximate dynamic programming field. The derivation of global performance guarantees for such approaches, probably considered as a desperate case, was to our knowledge never considered in the literature. In this article, we have shown a surprising result: \emph{any Local Policy Search algorithm}, as long as it is able to \emph{provide an approximate local optimum} of $J_{\nu}(\pi)$, \emph{enjoys a global performance guarantee} similar to the ones of approximate dynamic programming algorithms.
From a theoretical viewpoint, there is thus no reason to prefer approximate dynamic programming over policy search (practical reasons -- \textit{e.g.}, necessity of a simulator -- or other theoretical reasons -- \textit{e.g.}, rate of convergence -- may come in line).

Since the bounds of ADP are known to be tight,
the question whether the guarantee we have provided is tight  constitutes an interesting future research direction.
We suggested that it may be a better learning strategy to look for a policy $\pi$ satisfying $\pi\in\g_\Pi(\pi,\nu,\epsilon)$ instead of searching for a local maximum of $J_\nu$, as it leads to a better bound. Designing an algorithm that would do so efficiently is another interesting perspective. Finally, we here only considered pure actor algorithms, with only a parameterization of the policy. The extension of  our analysis to situations where one also uses a critic (a parameterization of the value function) is a natural track to explore.

\bibliographystyle{natbib2}
\bibliography{biblio.bib} 

\end{document}